\title{Configuration Space Singularities of The Delta Manipulator}
\date{\today}
\author{Marc Diesse\\ Faculty of Mechanics and Electronic, Hochschule Heilbronn }
\begin{document}
\maketitle
\begin{abstract}
We investigate the configuration space of the Delta-Manipulator, identify 24 points in the configuration space, where the Jacobian of the Constraint Equations looses rank and show, that these are not manifold points of the Real Algebraic Set, which is defined by the Constraint Equations.
\end{abstract}

\section{Introduction}
	The study of mechanism singularities is an active area of research, which started its formalization with papers by Gosselin, Zlatanov, Liu and Park \cite{gosselin:singularities}, \cite{zlatanov:singularities}, \cite{zlatanov:singularities_inst}, \cite{park:singularities}, \cite{park:manipulability}, \cite{liu:singularities}. Much work has been done trying to build a mathematical foundation for
	categorizing and comparing singular configurations, especially by Müller~\cite{mueller:singularities}.
	We hope to to add to this development by providing a simple mathematical formulation of manipulators and demonstrate its usefulness with 
	the classification of the configuration space singularities of the delta-platform.
	
	In addition we want to point out several points where confusion might arise and where future work could help to clarify local structure of real algebraic sets, which represent the configuration space of mechanism.

\section{Algebraic Preliminaries}\label{sec:prelim}
To start the discussion of configuration space singularities we need some definitions, to express the notion of configuration spaces and singularities.

\begin{definition}\label{def:singularity_algebraic}
	Let $X = \algv(f_1, \ldots f_k) \subset \R^n$ the zero-set of $k$ polynomials $f_1, \ldots f_k \in \rpn$. 
	It is $p \in X$ a \defm{singularity} of $X$, if 
	\[
	\rank \bigg[ \partial_j \, g_i \bigg]_{\substack{i=1, \ldots, k \\ j=1,\ldots, n}} < n-d,
	\]
	where $g_1, \ldots g_k$ generate the ideal of $X$, i.e. the set of all polynomials, which vanish on $X$, and $d$ is the local dimension of $X$ at $p$, which can be defined in a variety of equivalent ways (see  e.g. \cite{bochnak:real_alg_geom}).
\end{definition}

\begin{definition}\label{def:singularity_geometric}
	For any set $X$ and point $p \in X$, we say, that $p$ is a \defm{manifold point} of $X$, if for an euclidean neighbourhood $N$ of $p$, $N \cap X$ is
	an embedded submanifold of $\R^n$. Equivalently $p$ is a manifold point of $X$, iff for a neigbourhood $N$ of $p$ and a choice of linear coordinates 
	$X \cap N$ is the graph of an analytic function.
\end{definition}%
The reason we have two separate concepts of singularity is the following well known example:
\begin{ex}
Figure~\ref{fig:smooth_curve_example} shows the algebraic curve $X := \algv(f)$, with $f(x,y) = y^3 + x^2\,y - x^4 \in \R[x,y]$. Since $f(x,y)$ is irreducible and there exist nonsingular points $p \in X$, it is $\algi(X) = \langle f \rangle$ according to \cite[Theorem~4.5.1]{bochnak:real_alg_geom} Then the origin is a singularity of $X$, as stated by definition~\ref{def:singularity_algebraic}. 
But we can factorize $f$ in the ring of analytic function germs at the origin:
\[
f = (x^2 - y\,(1 + \sqrt{1 + y})) \cdot (x^2 + y\,(1 + \sqrt{1 + y})).
\]
Since $x^2 - y\,(1 - \sqrt{1 + y}) > 0 $, for $(x,y) \ne (0,0)$ close to the origin, it is 
\[
X \cap B_{\eps}(0) = \algv(x^2 - y\, (1 + \sqrt{1 + y})) \cap B_{\eps}(0),
\]
for an $\eps > 0$. Hence the origin is a manifold point of $X$ because $\algv(x^2 - y\,(1 + \sqrt{1 + y}))$ is an analytical manifold according to the analytic implicit function theorem.

\begin{remark}
In the case of complex algebraic sets, we don't have this problem, since here a point $p$ of an algebraic set $V$ is a singularity iff it is not a complex manifold point of $V$.	
\end{remark}

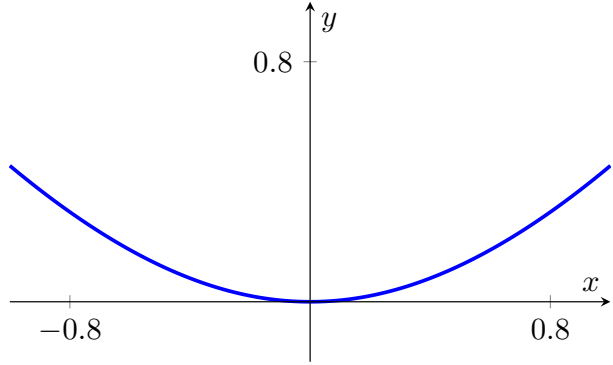
\begin{figure}
\centering
\resizebox{8cm}{!}{%
 \begin{tikzpicture}
 \begin{axis}[xmin=-1,xmax=0.5*2,ymin=-0.2, ymax=1, xtick={-0.8,0.8}, ytick={-0.8,0.8}, unit vector ratio*=1 1 1, axis lines=center, axis on top, xlabel={$x$}, ylabel={$y$}]
 \addplot[very thick, blue] file {plot_data_2xy2};
 \end{axis}
 \end{tikzpicture}}
\caption{the zero set of $y^3 + 2\,x^2\,y - x^4$}
\label{fig:smooth_curve_example}
 \end{figure}
\end{ex}
\section{Formal Manipulators}
In order to talk precisely about configuration spaces, we formalize the notion of manipulator:

\begin{definition}
	A formal \defm{manipulator} is a tupel $\CM = (X,\CA,g)$, with 
	$X \subset \R^s$ a real algebraic set, a choice $\CA = \{a_1, \ldots, a_{t_1}, c_1, \ldots c_{t_2}  \}$, with $a_i \colon X \to \R$ the projection on a coordinate, $c_j \colon X \to \R^2$ the projection on two coordinates $(x_i,x_j)$, if $x_i^2 + x_j^2 - r \in \algi(X)$, for a $r \in \R$. Besides, let $g \colon X \to \SER$ be a regular mapping. We call:
	\begin{itemize}
		\item [(i)]
		$X$ the \defm{Configuration Space}, 
		\item [(ii)]
		$\CA$ the set of \defm{actuators} of $\CM$. 
		\item [(iii)]
		$g$ the \defm{forward kinematic} of the manipulator $\CM$.
		\item [(iv)]
		The semialgebraic set $g(X)$, the \defm{workspace} of $\CM$.
	\end{itemize}
\end{definition}

All mechanism, which comprises rigid bodies, spherical, revolute and prismatic joints and where a subset of the prismatic or revolute joints is actuated, can be expressed with this formalism.

Now it is evident how to define and categorize the different notions of kinematic singularities. We follow the nomenclature of \cite{liu:singularities}.
\begin{definition}\label{def:kinematic_singularity}\hfill
\begin{itemize}
\item[(i)] 
A point $x \in X$, which is not a manifold point of $X$, we denote as \defm{configuration space singularity (CSS)}, and set
$\preg{X} \coloneqq \{ x \in X \mid \text{$x$ is a manifold point of $X$} \,\}$.

\item[(ii)] 
A point $x \in \preg{X}$, where the differential $\mathrm{d}g_x$ is not of full rank, is called an \defm{endeffector singularity (EES)}

\item[(iii)]
A point, $x \in \preg{X}$, where $(a_1, \ldots, a_{t_1}, \ldots, \rho_1 \circ c_1, \ldots, \rho_{t_2} \circ c_{t_2})$, or any subset of this set is not a chart about $x \in \preg{X}$, for all charts $\rho_i$ of $S^1$, is called an \defm{actuator singularity (AS)}.
\end{itemize}
\end{definition}

To showcase this formalism we cite the example of the crank-slider-mechanism which is quite stereotypical \cite{gosselin:singularities},\cite{zlatanov:singularities},\cite{park:singularities} in this context:
\begin{ex}
	The crank slider with joints A,B,C is depicted in the following figure:
	\begin{center}
	\begin{tikzpicture}
	[joint/.style={circle,inner sep=0pt, fill=black, minimum size=1.6mm},
	point/.style={circle,inner sep=0pt, draw, minimum size=1.6mm},
	sjoint/.style={circle,inner sep=0pt, fill=black, minimum size=1mm},
	laenge/.style={midway,inner sep=1.8pt},
	every node/.style={scale=1}]

	\draw[-,thick] (0,0) node[left,fill=white] {$A$} -- node[right] {$l_1$} +(60:1.5) coordinate (b) node[above,fill=white] {$B$} -- node[above] {$l_2$} (3,0) node[above right,fill=white] {$C$};
	\draw[-,dashed] (0,0) -- +(5,0);
	\draw[-,ultra thick] (3,0) -- +(-0.3,0) -- +(0.3,0);
	\draw[->] (-30: 0.7) arc[start angle=-30, end angle=-270, radius=0.7];
	\draw[->] (3,-0.3) -- +(-0.3,0); 
	\draw[->] (3,-0.3) -- +(0.3,0);

	\node at (0,0) [joint] {};
	\node at (b) [joint] {};
	\node at (3,0) [joint] {}; 
	
	\end{tikzpicture}
	
\end{center}
We denote the cartesian coordinates of $B$ with $x_B,y_B$, and the $x$-coordinate of $C$ with $x_C$.
Then the crank slider is $\CM = (X,\CA,g)$ with the configuration space $X = V(f_1, f_2) \subset \R^3$, where $f_1,f_2 \in \R[x_B,y_B,x_C]$ with
\begin{align*}
f_1 & = x_B^2 + y_B^2 - l_1^2,\\
f_2 & = (x_C - x_B)^2 + y_B^2 - l_2^2.
\end{align*}
In addition we set for the usual version of the crank slider, $\CA = \{a\}$, with $a(x_B,y_B,x_C) = x_C$ 
and the forward kinematic:
\begin{align*}
g \colon X & \to S^1 \cong \SO(2,\R) \subset \SER \\
(x_B,y_B,x_C) &\mapsto (x_B,y_B)
\end{align*}
If $l_1 \ne l_2$ we can easily check, that $f_1,f_2$ and the determinant of the Jacobian of $(f_1,f_2)$ has no common zeros and the crank slider
has therefore no singularities in the configuration space.
For $l_1 = l_2 =: l$ we have the two analytical paths
\[
\gamma_1(t) 
\coloneqq
\begin{pmatrix}
l\,\cos(t)\\
l\,\sin(t)\\
2\,l\,\cos(t)
\end{pmatrix},
\quad
\gamma_2(t)
\coloneqq
\begin{pmatrix}
l\,\cos(t)\\
l\,\sin(t)\\
0
\end{pmatrix}
\]
with $\gamma_1, \gamma_2 \subset X$ and $\dim \langle \gamma_1'(0), \gamma_2'(0) \rangle = 2$. Hence $\gamma_1(0) = \gamma_2(0) = (0,l,0)$ cannot be a manifold point of $X$, since $\dim X = 1$.
\end{ex}

\section{The Delta Manipulator}
The Delta Manipulator (figure~\ref{fig:delta}) is a type of parallel robot which consists of three identical limbs carrying a platform which serves as positioning device. 
 It was invented in the early 1980s by a research team led by Reymond Clavel and described in his Ph.D. Thesis~\cite{clavel:delta}.

 In most realizations (like the Fanuc M1) each limb comprises of a solid upper arm connected to the base with revolute joints and attached to each upper arm a parallelogram-linkage with spherical joints, which enables the end of the lower arms to travel on a spherical surface around the tip of the upper arm. 
 Both the joint-connections to the ground and to the moving platform are usually placed at the vertices of an equilateral triangle to achieve a symmetric design.

 In contrast to our illustration, the actual plattform is mounted upside down in almost all applications, in order to 
perform pick and place tasks.

\begin{figure}[h]
	\centering
	% Sketch output, version 0.3 (build 7d, Sun Mar 20 14:11:05 2016)
% Output language: PGF/TikZ,LaTeX
\begin{tikzpicture}[line join=round]
\draw(1.349,-.156)--(1.327,-.156)--(1.307,-.151)--(1.289,-.141)--(1.274,-.127)--(1.263,-.11)--(1.257,-.09)--(1.255,-.067)--(1.257,-.044)--(1.264,-.02)--(1.275,.003)--(1.29,.025)--(1.308,.044)--(1.329,.06)--(1.351,.072)--(1.374,.08)--(1.396,.083)--(1.418,.081)--(1.437,.074)--(1.454,.063)--(1.467,.048);
\draw(-1.255,-.068)--(-1.256,-.053)--(-1.259,-.038)--(-1.263,-.022)--(-1.27,-.007)--(-1.278,.008)--(-1.288,.022)--(-1.299,.035)--(-1.312,.047)--(-1.325,.057)--(-1.339,.066)--(-1.354,.073)--(-1.369,.078)--(-1.383,.081)--(-1.398,.083)--(-1.412,.082)--(-1.426,.079)--(-1.438,.074)--(-1.449,.067)--(-1.459,.058)--(-1.467,.048);
\draw(1.982,-.377)--(1.456,.061);
\draw(-1.982,-.377)--(-1.456,.061);
\draw(-1.754,-.53)--(-1.281,-.135);
\draw(1.754,-.53)--(1.281,-.135);
\filldraw[fill=white](1.737,-.413)--(1.299,-.047)--(-1.299,-.047)--(-1.737,-.413)--(-.438,-1.497)--(.438,-1.497)--cycle;
\draw(-1.683,-.155)--(-2.202,1.432);
\draw(1.683,-.155)--(2.202,1.432);
\draw(-1.518,-.23)--(-.152,-.695);
\draw(2.008,-.444)--(2.003,-.481)--(1.986,-.518)--(1.961,-.55)--(1.93,-.575)--(1.894,-.591)--(1.859,-.595)--(1.827,-.587)--(1.802,-.569)--(1.786,-.541)--(1.781,-.507)--(1.786,-.47)--(1.802,-.434)--(1.827,-.401)--(1.859,-.376)--(1.894,-.36)--(1.93,-.356)--(1.961,-.364)--(1.986,-.382)--(2.003,-.41)--(2.008,-.444);
\draw(-2.008,-.444)--(-2.003,-.481)--(-1.986,-.518)--(-1.961,-.55)--(-1.93,-.575)--(-1.894,-.591)--(-1.859,-.595)--(-1.827,-.587)--(-1.802,-.569)--(-1.786,-.541)--(-1.781,-.507)--(-1.786,-.47)--(-1.802,-.434)--(-1.827,-.401)--(-1.859,-.376)--(-1.894,-.36)--(-1.93,-.356)--(-1.961,-.364)--(-1.986,-.382)--(-2.003,-.41)--(-2.008,-.444);
\draw(-1.807,-.574)--(-1.754,-.53);
\draw(1.807,-.574)--(1.754,-.53);
\draw(-2.071,1.542)--(-2.334,1.323);
\draw(-2.071,1.542)--(-.703,2.357);
\filldraw[color=black,fill=white](-2.071,1.542) circle (2.5pt);
\draw(2.334,1.323)--(2.071,1.542);
\draw(2.071,1.542)--(.703,2.357);
\filldraw[color=black,fill=white](2.071,1.542) circle (2.5pt);
\filldraw[fill=white](-1.737,-.528)--(-.438,-1.612)--(-.438,-1.497)--(-1.737,-.413)--cycle;
\filldraw[fill=white](.438,-1.612)--(1.737,-.528)--(1.737,-.413)--(.438,-1.497)--cycle;
\draw(2.334,1.323)--(.966,2.1);
\filldraw[color=black,fill=white](2.334,1.323) circle (2.5pt);
\filldraw[color=black,fill=white](-2.334,1.323) circle (2.5pt);
\draw(-2.334,1.323)--(-.966,2.1);
\filldraw(-.152,-.695) circle (1.5pt);
\filldraw[color=black,fill=white](-.703,2.357) circle (2.5pt);
\filldraw[color=black,fill=white](.703,2.357) circle (2.5pt);
\filldraw[fill=white](.966,2.157)--(.703,2.377)--(-.703,2.377)--(-.966,2.157)--(-.263,1.57)--(.263,1.57)--cycle;
\draw(-.835,2.267)--(0,2.035);
\filldraw[fill=white](.263,1.455)--(.966,2.042)--(.966,2.157)--(.263,1.57)--cycle;
\filldraw[fill=white](-.966,2.042)--(-.263,1.455)--(-.263,1.57)--(-.966,2.157)--cycle;
\filldraw[color=black,fill=white](-.966,2.1) circle (2.5pt);
\filldraw[color=black,fill=white](.966,2.1) circle (2.5pt);
\filldraw[fill=white](-.438,-1.612)--(.438,-1.612)--(.438,-1.497)--(-.438,-1.497)--cycle;
\filldraw(0,2.035) circle (1.5pt);
\filldraw[fill=white](.526,-1.503)--(-.526,-1.503)--(-.526,-1.742)--(.526,-1.742)--cycle;
\draw(0,-1.56)--(0,-.406);
\filldraw[fill=white](-.263,1.455)--(.263,1.455)--(.263,1.57)--(-.263,1.57)--cycle;
\draw(-.263,-.406)--(-.263,1.512);
\draw(.263,-.406)--(.263,1.512);
\draw(-.263,-.406)--(.263,-.406);+
\filldraw[color=black,fill=white](.263,1.512) circle (2.5pt);
\filldraw[color=black,fill=white](-.263,1.512) circle (2.5pt);
\filldraw[color=black,fill=white](.263,-.406) circle (2.5pt);
\filldraw[color=black,fill=white](-.263,-.406) circle (2.5pt);
\node[anchor=west] at (0,2.035) {\scriptsize{$P$}};
\node[anchor=north, yshift=1.1pt] at (-.417,2.151) {\scriptsize{$r_2$}};
\node[anchor=north, yshift=0.7pt] at (-.835,-.462) {\scriptsize{$r_1$}};
\node[anchor=east, xshift=1pt] at (1.943,.639) {\scriptsize{$a$}};
\node[anchor=south,yshift=2pt] at (1.519,1.792) {\scriptsize{$b$}};
\end{tikzpicture}% End sketch output
\caption{Illustration of the Delta Manipulator}\label{fig:delta}
\end{figure}
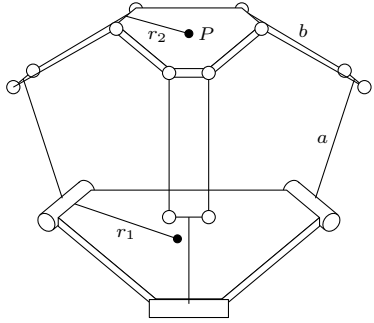
In order to investigate configuration space singularities of the delta manipulator, we want to formalize the design sketched above with parameters $a$, $b$, $r_1$, $r_2$.
So we define $M_{a,b,d} = (X_{a,b,d},\CA,g)$, $d \coloneqq r_1 - r_2$,  where the configuration space $X_{a,b,d} = \algv( \{ s_i,c_j,l_k \mid i,j=1,2,3 \text{ and } k=1,\ldots,6 \})$ is given by the following polynomials
\begin{equation}\label{eq:delta_system_1}
\begin{aligned}
s_1 &\coloneqq x_1^2 + y_1^2 + z_1^2 - b^2,\\
s_2 &\coloneqq x_2^2 + y_2^2 + z_2^2 - b^2,\\
s_3 &\coloneqq x_3^2 + y_3^2 + z_3^2 - b^2,\\
c_1 &\coloneqq ca_1^2 + sa_1^2 - a^2,\\
c_2 &\coloneqq ca_2^2 + sa_2^2 - a^2,\\
c_3 &\coloneqq ca_3^2 + sa_3^2 - a^2
\end{aligned}
\qquad
\begin{aligned}
\begin{pmatrix}
l_1\\
l_2\\
l_3
\end{pmatrix}
& = v_1 - A\,v_2, \\
\begin{pmatrix}
l_4 \\
l_5\\
l_6
\end{pmatrix}
& = v_1 - A^{-1}\,v_3
\end{aligned}
\end{equation}
where
\begin{equation*}
v_i \coloneqq 
\begin{pmatrix}
d + ca_i + x_i \\
y_i\\
z_i+sa_id
\end{pmatrix},
\qquad 
A \coloneqq
\begin{pmatrix}
-\frac{1}{2} & -\frac{\sqrt{3}}{2} & 0 \\
\phantom{-}\frac{\sqrt{3}}{2} & -\frac{1}{2} & 0 \\
\phantom{-} 0 & \phantom{-} 0 & 1
\end{pmatrix} \in \SOR,
\end{equation*}
with the 15 variables
\[
x_1,y_1,z_1,x_2,y_2,z_2,x_3,y_3,z_3,ca_1,sa_1,ca_2,sa_2,ca_3,sa_3.
\]
Although we are primarily interested in the configuration space of the delta manipulator, for completeness sake, we set
$\CA = \{c_1,c_2,c_3\}$, where $c_i$ is the projection on $(ca_i,sa_i)$ and
\begin{align*}
g \colon X_{a,b,d} & \to \R^3 \subset \SER \\
(x_i,y_i,z_i,ca_j,sa_j) & \mapsto v_1.
\end{align*}

We now collect all the polynomials $s_i,c_j,l_k$ in a polynomial map $F \colon \R^{15} \to \R^{12}$ and can formulate our main results: 
\begin{thm}\label{thm:delta_jacobi}
	Let $a,b,d \in \R^+$, with $a > d$. The dihedral group $D_3$ acts on $X_{a,b,d}$, which restricts to a group action on
	\[
	S_{a,b,c} \coloneqq \{\, x \in X_{a,b,d} \mid \rank DF(p_i) < 12 \, \}
	\] 
	There exists $24$ points $p_i \in S_{a,b,d}$ and $D_3$ acts freely on $P_{a,b,d} := \{ \, p_i \mid i=1,\ldots, 24 \}$. Four representatives of the orbits in $P_{a,b,d}/D_3$ are given in table~\ref{table:cs_singularities}.
\end{thm}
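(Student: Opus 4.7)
The plan is to treat the three claims of the theorem in sequence. For the existence of the $D_3$-action, I would define the $C_3$-generator $\rho$ as the cyclic relabelling of the three legs $1 \to 2 \to 3 \to 1$ acting on the $15$ coordinates as a permutation, and the reflection $\tau$ as a swap of two legs together with the sign flip $(y_i, sa_i) \mapsto (-y_i, -sa_i)$ that realises the mirror symmetry of the mechanism. Using $A^3 = I$ and the conjugation of $A$ by the $y$-sign flip, a direct computation shows that the ideal $\langle F \rangle$ is $D_3$-invariant, with each $g \in D_3$ acting on $\R^{15}$ as a signed permutation $P_g$ and on $\R^{12}$ as a matrix $Q_g$ satisfying $F \circ P_g = Q_g \circ F$. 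Differentiating gives $DF(P_g x)\, P_g = Q_g\, DF(x)$, so $\rank DF$ is constant along orbits and $S_{a,b,d}$ is $D_3$-stable.

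To determine $P_{a,b,d}$, I combine the system $F = 0$ with the rank condition $\rank DF < 12$, i.e., the vanishing of all $12 \times 12$ minors of the $12 \times 15$ Jacobian, and exploit the $D_3$ symmetry by solving modulo the group action (for instance, fixing leg~$1$ in a canonical planar configuration). The Jacobian has useful block structure: each $s_i$ and $c_i$ row depends only on its own leg, while the six $l_k$ rows couple pairs of legs through the constant matrix $A$; this decomposition constrains the combinatorial patterns of rank drop and reduces the relevant minors to a manageable list. A Gr\"obner basis computation (Singular or Maple) on the resulting polynomial system should produce a zero-dimensional ideal whose real solutions, expanded over the $D_3$-orbit, give precisely the $24$ points claimed; the four orbit representatives are then verified by direct substitution against Table~\ref{table:cs_singularities}.

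Finally, freeness of $D_3$ on $P_{a,b,d}$ is equivalent to no $p_i$ being fixed by a nontrivial element. A $\rho$-fixed point must have all three legs in identical local configurations, i.e., a totally symmetric pose of the delta; under $a > d > 0$ one checks directly that such a pose has full Jacobian rank and is not in $S_{a,b,d}$. A reflection-fixed point must have its fixed leg satisfy $y_i = sa_i = 0$, which a short case analysis excludes on each of the four representatives. Since $|D_3| = 6$ and $|P_{a,b,d}| = 24$, freeness forces exactly four orbits, as stated. The main obstacle is the symbolic size of the combined polynomial-plus-minor system: even after quotienting by the $D_3$ action and using the block structure of $DF$, the computation is substantial, and most of the work will go into choosing a tractable monomial order and confirming that the finite candidate list consists of genuine real points on $X_{a,b,d}$.
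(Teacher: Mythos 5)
Your first and third steps line up with the paper's own proof. The paper likewise realizes the symmetry as a signed-permutation representation $\Psi \colon D_3 \to \GL(15,\R)$ together with a linear action $A_d$ on the span of the twelve constraint polynomials, and deduces invariance of the rank-drop locus; it does so by pushing the identity $D(\Phi(d)F) = A_d \cdot DF$ through the Cauchy--Binet formula to show the ideal of $12\times 12$ minors is stable, whereas your identity $DF(P_g x)\,P_g = Q_g\,DF(x)$ yields the same conclusion slightly more directly. One discrepancy worth noting: the paper's reflection generator is the flip $z_i \mapsto -z_i$, $sa_i \mapsto -sa_i$ with \emph{no} permutation of the legs, while yours transposes two legs and flips $(y_i,sa_i)$; both preserve $X_{a,b,d}$, but they are different order-6 symmetries. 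For freeness you only need that none of the $24$ listed points is fixed by a nontrivial element, which is immediate from the table (for instance $sa_1 = \pm\sqrt{a^2-d^2} \ne 0$ since $a>d$, and the three leg blocks are pairwise distinct); your stronger claim that every totally symmetric pose has full Jacobian rank is neither needed nor obviously true.

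The gap is in your second step. Solving $F=0$ together with the vanishing of all $12\times 12$ minors by a Gr\"obner basis computation with $a,b,d$ kept as symbolic parameters, and then certifying that the real solution set consists of exactly $24$ points for \emph{every} $a>d>0$, is more than Theorem~\ref{thm:delta_jacobi} asserts and is not known to be feasible: the paper carries out that computation only for the fixed numeric parameters of Theorem~\ref{thm:delta_jacobi_sv} and explicitly leaves the parametric version as a conjecture (real-root counting over a parametric base is exactly where such an argument breaks down). For Theorem~\ref{thm:delta_jacobi} itself no solving is required: one substitutes the tabulated coordinates into the generators of $I+J$ and checks that they vanish identically in $a,b,d$. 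What your proposal then omits, and what the paper must and does supply, is the verification that the entries of $q_3$ and $q_4$ are well defined and real for all $a>d>0$: their denominators, e.g.\ $u = 2b(a^2-3d^2) + q(a^2+b^2)$ with $q = \sqrt{a^2+3d^2}$, must be shown to have no admissible real zeros, which the paper does by computing that the discriminant of $u$ as a quadratic in $b$ equals $-9a^2d^2 + 9d^4 < 0$. Without that check the ``24 points'' are not even known to exist as real points for general parameters.
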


\begin{thm}\label{thm:delta_jacobi_sv}
For the choice $a=3$, $b=5$, $d=0.5$, it is $X_{3,5,0.5}$ an irreducible real variety with $\dim X_{3,5,0.5} = 3$ and
$S_{3,5,0.5} = P_{3,5,0.5}$.
\end{thm}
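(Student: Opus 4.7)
The plan is to verify the three assertions --- irreducibility, $\dim X_{3,5,0.5}=3$, and $S_{3,5,0.5}=P_{3,5,0.5}$ --- by combining the structural information already established in Theorem~\ref{thm:delta_jacobi} with targeted symbolic computations at the given numerical parameters.

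First, I would compute a Gröbner basis of the ideal
\[
I \coloneqq \langle s_1,s_2,s_3,c_1,c_2,c_3,l_1,\ldots,l_6\rangle \subset \mathbb{Q}[x_1,y_1,z_1,\ldots,ca_3,sa_3]
\]
specialised at $a=3$, $b=5$, $d=1/2$, using a computer algebra system such as \textsc{Singular} or \textsc{Macaulay2}. From this one reads off the Krull dimension $3$ and verifies primality (including absolute primality over $\mathbb{C}$) by a primary decomposition routine. Exhibiting a single smooth real point of local dimension $3$ --- produced by picking a generic actuator configuration, solving the kinematics numerically, and checking $\rank DF = 12$ there --- then yields both the dimension claim and irreducibility of the real variety in the sense of~\cite{bochnak:real_alg_geom}.

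Next, to pin down the singular locus, form the augmented ideal
\[
J \coloneqq I + \bigl\langle \text{all } 12\times 12 \text{ minors of } DF \bigr\rangle,
\]
whose real vanishing locus is exactly $S_{3,5,0.5}$. I would show that $J$ is zero-dimensional with precisely $24$ real solutions, which by Theorem~\ref{thm:delta_jacobi} must then coincide with $P_{3,5,0.5}$. Since the $24$ candidate points are already known and $D_3$-equivariant, the efficient route is a bidirectional check: verify by direct substitution that every $p_i \in P_{3,5,0.5}$ lies in $\algv(J)$, and conversely rule out any other real solution --- either by a Gröbner basis of $J$ or, more tractably, of a $D_3$-quotient of $J$ obtained by restricting to an orbit slice --- followed by extraction of the radical.

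The main obstacle is the singular-locus computation: $J$ is generated by $12 + \binom{15}{12} = 467$ polynomials in $15$ variables, and a naive Gröbner basis is almost certainly intractable. The viable strategy is to quotient by the $D_3$-action supplied by Theorem~\ref{thm:delta_jacobi} and to exploit the block structure of $DF$ --- the diagonal sphere and circle blocks together with the loop-closure block --- reducing the rank-drop condition to a much smaller collection of sub-minors before saturating by the generic non-vanishing of a maximal submatrix. A rational univariate representation or interval-arithmetic certification on the reduced system then confirms that exactly four $D_3$-orbits of size six contribute, matching the four representatives listed in Table~\ref{table:cs_singularities}.
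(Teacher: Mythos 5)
Your proposal matches the paper's own argument: the authors likewise prove this theorem by computing a Gröbner basis of the ideal generated by the constraint polynomials together with all maximal minors of $DF$, reading off dimension, irreducibility and the $24$-point singular locus, and they make the computation tractable exactly as you suggest, by exploiting the symmetry of the ideal (via the algorithm of~\cite{steidel:symmetry} or modular methods such as \texttt{modStd} in \textsc{Singular}). The only difference is one of emphasis: you spell out the verification of irreducibility via primary decomposition plus a smooth real point, which the paper leaves implicit in the Gröbner basis computation.
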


\begin{thm}\label{thm:delta}
Let $b \ne \frac{3\,d^2 - a^2}{\sqrt{a^2 + 3d^2}}$ and $\dim X_{a,b,d} = 3$. Then the points in $P_{a,b,d}$ are not manifold points of $X_{a,b,d}$.
If $S_{a,b,d} = P_{a,b,d}$, it is $P_{a,b,d}$ the whole set of configuration space singularities of the Delta Manipulator. 
\end{thm}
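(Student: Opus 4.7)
The theorem makes two claims, which I would treat independently. The second --- that $P_{a,b,d}$ exhausts the configuration space singularities under the hypothesis $S_{a,b,d} = P_{a,b,d}$ --- follows from the analytic implicit function theorem: at every $p \in X_{a,b,d}\setminus S_{a,b,d}$ the Jacobian $DF(p)$ attains the full rank $12 = 15 - \dim X_{a,b,d}$, so some $12 \times 12$ minor is non-zero at $p$ and $X_{a,b,d}$ is locally the graph of a real analytic map in three variables. In particular every such $p$ is a manifold point, and once the first claim is established, the set of CSS is exactly $P_{a,b,d}$.

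For the first claim I would first reduce the workload by symmetry. Theorem~\ref{thm:delta_jacobi} exhibits a free $D_3$-action on $P_{a,b,d}$, and because this action permutes the constraint polynomials $\{s_i,c_j,l_k\}$ it extends to an algebraic automorphism of $X_{a,b,d}$. Algebraic automorphisms send manifold points to manifold points, so the property of being a CSS is $D_3$-invariant, and it suffices to test one representative in each of the four orbits listed in Table~\ref{table:cs_singularities}.

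For each representative $p_i$, the plan --- modelled on the crank-slider example --- is to produce analytic arcs $\gamma^{(1)},\ldots,\gamma^{(m)}\colon(-\varepsilon,\varepsilon)\to X_{a,b,d}$ with $\gamma^{(k)}(0)=p_i$ whose tangent vectors span a subspace of $\R^{15}$ of dimension strictly greater than $\dim X_{a,b,d}=3$. This rules out $p_i$ being a manifold point, since the tangent space of any embedded $3$-submanifold through $p_i$ would have to contain all these vectors and therefore have dimension at least $4$. The arcs themselves I would obtain by factoring the local ideal of $X_{a,b,d}$ at $p_i$ in the ring of convergent power series --- exactly as the introductory example factored $f = (x^2 - y(1+\sqrt{1+y}))(x^2 + y(1+\sqrt{1+y}))$ into two analytic germs, only now yielding several analytic branches that meet at $p_i$ --- and then applying the analytic implicit function theorem on each branch.

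The hardest step will be producing these branches explicitly. The Zariski tangent space $\ker DF(p_i)$ has dimension at least $4$, but the cautionary example at the start of the paper warns that not every Zariski tangent direction is realised by an analytic curve contained in $X_{a,b,d}$, so a genuine local analysis is unavoidable. The excluded value $b = (3d^2 - a^2)/\sqrt{a^2 + 3d^2}$ almost certainly marks the parameter locus where two of the local branches become tangent or coalesce, collapsing the spanning dimension back to $3$; ruling this out for all other $(a,b,d)$ and exhibiting sufficiently transverse branches at each of the four orbit types is where the delicate case work, most likely done in a symmetry-adapted coordinate system diagonalising $DF(p_i)$, will concentrate.
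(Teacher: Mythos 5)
Your top-level strategy is the same as the paper's: the second claim is handled exactly as you say (full rank of $DF$ off $S_{a,b,d}$ plus the analytic implicit function theorem), the $D_3$-symmetry reduces the first claim to the four orbit representatives, and the core argument is to exhibit analytic arcs through each representative whose tangent vectors span a subspace of dimension $4 > \dim X_{a,b,d}$. That skeleton is correct and is what the paper does.

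However, the step you yourself flag as ``the hardest'' is the entire content of the proof, and the route you sketch for it --- factoring the local ideal of $X_{a,b,d}$ at $p_i$ in the ring of convergent power series, in coordinates adapted to $DF(p_i)$ --- is not carried out and is not what makes the problem tractable. You are facing $12$ equations in $15$ unknowns; a direct primary decomposition of the analytic local ring at each singular point is not something you can do by hand, and you give no indication of how the branches would actually be produced. The paper avoids this entirely by exploiting the kinematics: every configuration is written as $Q(p,\psi_1,\psi_2,\psi_3)$ with $p\in\R^3$ the platform point and $\psi_i$ the arm angles, subject only to $\lvert p-m_i(\psi_i)\rvert^2=b^2$. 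The four arcs are then obtained by varying one or two angles and tracking $p$ along intersections of spheres $S_i(t)$ and circles $K_j(t)$ in $\R^3$. The one genuinely delicate point --- at the singular configurations two of the spheres coincide, so the intersection circle degenerates and the naive implicit function theorem does not apply --- is resolved by a separate lemma showing that the intersection points of two circles (or spheres) admit an analytic continuation through the coincidence instant, with an explicit limit direction $\pm\frac{r_0}{\sqrt{p_x'(0)^2+p_y'(0)^2}}(-p_y'(0),p_x'(0))^T$. That lemma, and the verification that the four resulting tangent vectors are independent (read off from the last six angle-coordinates alone), is the substance you are missing. Your guess about the excluded value $b=(3d^2-a^2)/\sqrt{a^2+3d^2}$ is essentially right in spirit --- it is the locus where the relevant circle--sphere intersection becomes tangential, so the construction of two of the four arcs breaks down --- but as written your proposal asserts the existence of sufficiently many transverse branches rather than proving it.
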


\begin{remarks}\hfill
\begin{itemize}
\item[(i)]
It is conjectured, that the statement of theorem~\ref{thm:delta_jacobi_sv} is valid for all $a,b,d \in \R^+$, with $a > d$.

\item[(ii)]
We are at the moment not able to prove theorem~\ref{thm:delta} for the special case $b = \frac{3\,d^2 - a^2}{\sqrt{a^2 + 3d^2}}$, which
represents an exceptional posture for configurations $q_4$ and $q_3$ of table~\ref{table:cs_singularities}, but it is conjectured, that the theorem
is still valid in this cases.

\item[(iii)]
Theorem~\ref{thm:delta_jacobi_sv} can be easily proven by calculating a groebner base of the ideal $S$ generated by the $s_i,c_j,l_k$ and all principal minors of $F$. Since $S$ is a symmetric ideal with regard to the permutation $x_i/y_i/z_i/ca_i/sa_i \to x_{\pi(i)}/y_{\pi(i)}/z_{\pi(i)}/ca_{\pi(i)}/sa_{\pi(i)}$, with $\pi = (123)$, we can apply the algorithm from~\cite{steidel:symmetry}, though other modular methods (i.e. \texttt{modStd} of 'Singular') should work almost as well. We were able to calculate a groebner base for $S$ with a core-i5 laptop in less than 5 minutes. In this way we can proceed for other parameter of choice or with more work one should be able to derive a genericity statement for Theorem~\ref{thm:delta_jacobi_sv}.

\item[(iv)] 
It is not enough for the proof of theorem~\ref{thm:delta} to show, that $\algi(X_{a,b,d})$ is generated by the polynomial entries of $F$, since in real algebraic geometry singularities might still be manifold points (compare section~\ref{sec:prelim}). But it follows from theorem~\ref{thm:delta}, that all the points in $P_{a,b,d}$ are singularities of $X_{a,b,d}$ as real algebraic set.
\end{itemize}
\end{remarks}

\begin{table}[ht]
	\renewcommand{\arraystretch}{1.5}
	\caption{CS-Singularities of the Delta-Manipulator with $q=\sqrt{a^2 + 3\,d^2}$}\label{table:cs_singularities}
	\centering
		\begin{tabular}{@{}ccccc} \toprule
		Variable & $q_1$ & $q_2$ & $q_3$ & $q_4$ \\  \midrule
$x_{1}$ 		 & 	 $-  \frac{db}{\sqrt{a^2 + 3\,d^2}}$	 		& 	$\frac{db}{\sqrt{a^2 + 3\,d^2}}$ 			& 	$\frac{db}{\sqrt{a^2 + 3\,d^2}}$ 		 								&	$-  \frac{db}{\sqrt{a^2 + 3\,d^2}}$ \\
$y_{1}$ 		 & 	 $- \frac{\sqrt{3}\,db}{\sqrt{a^2 + 3\,d^2}} $ 		& 	$ \frac{\sqrt{3}\,db}{\sqrt{a^2 + 3\,d^2}}$		& 	$ \frac{\sqrt{3}\,db}{\sqrt{a^2 + 3\,d^2}}$ 		 							&	$- \frac{\sqrt{3}\,db}{\sqrt{a^2 + 3\,d^2}} $ \\
$z_{1}$ 		 & 	 $  \frac{\sqrt{a^2 - d^2} \, b}{\sqrt{a^2 + 3\,d^2}}$ 	& 	$\frac{\sqrt{a^2 - d^2} \, b}{\sqrt{a^2 + 3\,d^2}}$	& 	$\frac{\sqrt{a^2 - d^2} \, b}{\sqrt{a^2 + 3\,d^2}}$ 		 						&	$  \frac{\sqrt{a^2 - d^2} \, b}{\sqrt{a^2 + 3\,d^2}}$ \\
$x_{2}$ 		 & 	 $-  \frac{db}{\sqrt{a^2 + 3\,d^2}}$ 			& 	$\frac{db}{\sqrt{a^2 + 3\,d^2}}$			& 	$\frac{db}{\sqrt{a^2 + 3\,d^2}}$ 		 								&	$-  \frac{db}{\sqrt{a^2 + 3\,d^2}}$ \\
$y_{2}$ 		 & 	 $ \frac{\sqrt{3}\,db}{\sqrt{a^2 + 3\,d^2}}$ 		& 	$ -\frac{\sqrt{3}\,db}{\sqrt{a^2 + 3\,d^2}}$		& 	$ -\frac{\sqrt{3}\,db}{\sqrt{a^2 + 3\,d^2}}$ 		 							&	$ \frac{\sqrt{3}\,db}{\sqrt{a^2 + 3\,d^2}}$ \\
$z_{2}$ 		 & 	 $  \frac{\sqrt{a^2 - d^2} \, b}{\sqrt{a^2 + 3\,d^2}}$ 	& 	$  \frac{\sqrt{a^2 - d^2} \, b}{\sqrt{a^2 + 3\,d^2}}$	& 	$  \frac{\sqrt{a^2 - d^2} \, b}{\sqrt{a^2 + 3\,d^2}}$ 		 						&	$  \frac{\sqrt{a^2 - d^2} \, b}{\sqrt{a^2 + 3\,d^2}}$ \\
$x_{3}$ 		 & 	 $ \frac{2db}{\sqrt{a^2 + 3d^2}}$	 		& 	$ -\frac{2db}{\sqrt{a^2 + 3d^2}}$	 		& 	$\frac{2bd (bq - 2a^2 + b^2 + 3d^2)}{2b(a^2 - 3d^2) - q(a^2 + b^2)}$ 						&	$-\frac{2bd (bq + 2a^2 - b^2 - 3d^2)}{2b(a^2 - 3d^2) + q(a^2 + b^2)}$ \\
$y_{3}$ 		 & 	 $\scriptstyle{0}$ 			 		& 	$\scriptstyle{0}$ 					& 	$\scriptstyle{0}$ 		 										&	$\scriptstyle{0}$ \\
$z_{3}$ 		 & 	 $  \frac{\sqrt{a^2 - d^2} \, b}{\sqrt{a^2 + 3\,d^2}}$ 	& 	$  \frac{\sqrt{a^2 - d^2} \, b}{\sqrt{a^2 + 3\,d^2}}$	& 	$-\frac{b\sqrt{a^2 - d^2}(2bq - a^2 - b^2 + 6d^2)}{-2b(a^2 - 3d^2) + q(a^2 + b^2)}$ 				&	$\frac{b\sqrt{a^2 - d^2}(2bq + a^2 + b^2 - 6d^2)}{2b(a^2 - 3d^2) + q(a^2 + b^2)}$ \\
$ca_{1}$ 		 & 	 $\scriptstyle{- d}$ 					& 	$\scriptstyle{- d}$ 		 			& 	$\scriptstyle{-d}$ 		 										&	$\scriptstyle{-d}$ \\
$sa_{1}$ 		 & 	 $\scriptstyle{\sqrt{a^2 - d^2}}$			& 	$\scriptstyle{-\sqrt{a^2 - d^2}}$			& 	$\scriptstyle{-\sqrt{a^2 - d^2}}$	 									&	$\scriptstyle{\sqrt{a^2 - d^2}}$ \\
$ca_{2}$ 		 & 	 $\scriptstyle{- d}$ 					& 	$\scriptstyle{- d}$ 					& 	$\scriptstyle{-d}$		 										&	$\scriptstyle{-d}$ \\
$sa_{2}$ 		 & 	 $\scriptstyle{\sqrt{a^2 - d^2}}$			& 	$\scriptstyle{-\sqrt{a^2 - d^2}}$			& 	$\scriptstyle{-\sqrt{a^2 - d^2}}$	 									&	$\scriptstyle{\sqrt{a^2 - d^2}}$ \\
$ca_{3}$ 		 & 	 $\scriptstyle{- d}$ 					& 	$\scriptstyle{- d}$ 					& 	$\frac{6bd(a^2 - d^2)(b - q)}{- 2bq(a^2 - 3d^2) + q^2(a^2 + b^2) } \scriptstyle{- d}$	 			&	$\frac{6bd(a^2 - d^2)(b + q)}{2bq(a^2 - 3d^2) + q^2 (a^2 + b^2) } \scriptstyle{- d}$ \\
$sa_{3}$ 		 & 	 $\scriptstyle{\sqrt{a^2 - d^2}}$			& 	$\scriptstyle{-\sqrt{a^2 - d^2}}$			& 	$\frac{6bd^2 \sqrt{a^2-d^2} (q + 2b) }{-2bq(a^2 - 3d^2) + q^2(a^2 + b^2)} \scriptstyle{-\sqrt{a^2-d^2}}$ 	&	$\frac{6bd^2 \sqrt{a^2-d^2} (q - 2b)}{2bq(a^2-3d^2) + q^2(a^2 + b^2)} \scriptstyle{+\sqrt{a^2-d^2}}$ \\ \bottomrule
	\end{tabular}

\end{table}

\subsection{The proof of Theorem~\ref{thm:delta_jacobi}}\label{sec:proof1}
We have the following faithful representation $\Psi \colon D_3 = \langle r, s \rangle \hookrightarrow \GL(15,\R)$:
\[
r \mapsto \begin{pmatrix}
\zm & E & \zm & & & \\
\zm & \zm & E & & \smash{\raisebox{-1ex}{\huge$\mathbf{0}$}} & \\ 
E & \zm & \zm & & &\\
& & & \zm & e & \zm\\
& \smash{\raisebox{-1ex}{\huge$\mathbf{0}$}} & & \zm & \zm & e \\
& & & e & \zm & \zm
\end{pmatrix}, \quad
s \mapsto
\begin{pmatrix}
S & \zm & \zm & & & \\
\zm & S & \zm & & \smash{\raisebox{-1ex}{\huge$\mathbf{0}$}} & \\ 
\zm & \zm & S & & &\\
& & & s & \zm & \zm\\
& \smash{\raisebox{-1ex}{\huge$\mathbf{0}$}} & & \zm & s & \zm\\
& & & \zm & \zm & s
\end{pmatrix},
\]
where
\[
s \coloneqq \begin{pmatrix} 1 & 0\\ 0 & -1 \end{pmatrix}, \qquad
S \coloneqq \begin{pmatrix} 1 & 0 & 0\\ 0 & 1 & 0 \\ 0 & 0 & -1 \end{pmatrix},
\] 
and $E,e$ are the identity matrices in $\R^{3\times3}$ and $\R^{2\times2}$ respectively. Now let $\Phi$ denote the induced action on $R = \R[x_i,y_i,z_i,ca_j,sa_j]$, i.e. $\Phi(d)(f) = f(\Psi(d)\,\boldsymbol{x})$. We will show, that $\Phi(d)(I) = I$, for $d \in D_3$ and $I$ the ideal generated by the polynomials $s_i,c_j,l_k$,  $i,j=1,2,3$ and $k = 1, \ldots,6$. Then $D_3$ acts on $X=\algv(I)$ via $\Psi$.

With the permutation $\pi = (123) \in S_3$, we clearly have
\[
\Phi(r)(s_i) = s_{\pi(i)}, \ \Phi(r)(c_i) = c_{\pi(i)}, \ \Phi(s)(s_i) = s_i, \ \Phi(s)(c_i) = c_i.
\]
Now we consider the action on $R^3$ componentwise and use $A^2 = A^{-1}$. It is then $\Phi(r)v_i = v_{\pi(i)}$, hence
\begin{gather*}
\Phi(r) (v_1 - A \, v_2)  = v_2 - A \, v_3 = A^{-1} (A v_2 - A^{-1} v_3) = -A^{-1} (v_1 - A v_2 - (v_1 - A^{-1} v_3)), \\
\Phi(r) (v_1 - A^{-1} \, v_3) = v_2 - A^{-1} \, v_1 = - A^{-1} (v_1 - A v_2),\\ 
\Phi(s) (l_i) = l_i, \ \text{for $i=1,2,4,5$},  \\
\Phi(s) (l_3) = -l_3, \quad \Phi(s)(l_6) = -l_6.
\end{gather*}
So it follows $\Phi(d)(I) = I$ for all $d \in D_3$ and we have an action of $D_3$ on $X$.

Now let $J \leq \R[x]$ be the ideal of the principal minors of $DF$. We check, that $\Phi$ acts on $J$.
According to definition we have:
\begin{equation}\label{eq:minors1}
D (\Phi(d)F) = D \bigl(F(\Psi(d)\, \boldsymbol{x})\bigr) = DF(\Psi(d)\, \boldsymbol{x}) \cdot \Psi(d)
= \Phi(d) DF(\boldsymbol{x}) \cdot \Psi(d),
\end{equation}
where we also write $\Phi(d)$ for the induced action on $R^{12 \times 15}$ and $R^{12}$.

For a tuple $K = (i_1, \ldots i_{12}) \in \N^{12}$ with $1 \leq i_1 \leq i_2 \leq  \ldots i_{12} \leq 15$ and a Matrix $M \in M^{12 \times 15}$, 
we denote by $M_{(K)}$ the matrix comprised of the $K$ columns of $M$.
From \eqref{eq:minors1} we conclude $D(\Phi(d) F) \cdot \Psi(d)^{-1}_{(K)} = \Phi(d) DF_{(K)}$ and consequently 
because the action of $\Phi(d)$ on $R$ respects the ring structure, we have
\begin{equation}\label{eq:minors2}
\Phi(d) \, \det DF_{(K)} = \det \, (\Phi(d) DF_{(K)}) = \det \bigr( D(\Phi(d) F) \cdot \Psi(d)^{-1}_{(K)} \bigl)
\end{equation}
Since $\Phi$ acts linearly on the $\R$-vector space generated by the polynomials $s_i,c_j,l_k$, $i,j=1,2,3$ and
$k = 1, \ldots 6$. we have $\Phi(d)\, F = A_d \, \cdot F$, for $d \in D_3$ and a corresponding $A_d \in \R^{12\times 12}$, so it is
\begin{equation}\label{eq:minors3}
D (\Phi(d) F) = A_d \cdot DF.
\end{equation}
With \eqref{eq:minors2} and \eqref{eq:minors3} we get now
\[
\Phi(d) \, \det DF_{(K)} = \det A_d \cdot \det (DF \cdot \Psi(d)^{-1}_{(K)}). 
\]
Now we can use the Cauchy-Binet Formula and we have
\[
\Phi(d) \det DF_{(K)} =  \sum_{\substack{L = (j_1, \ldots, j_{12}) \in \N^{12}\\ 1 \leq j_1 < \ldots < j_{12} \leq 15}} \det DF_{(L)} \cdot \det \Psi(d)^{-1}_{(K)^{(L)}} \cdot \det A_d,
\]
where $\Psi(d)^{-1}_{(K)^{(L)}}$ means the matrix comprising the $L$ rows of $\Psi(d)^{-1}_{(K)}$. This shows $\Phi(d)(J) \subset J$ and consequently $\Psi(d) S_{a,b,d} = \Psi(d) \, \algv(I+J) \subset \algv(I+J) = S_{a,b,d}$.

We easily check, that $\Psi(D_3)$ acts freely on the orbits generated by the four points $q_1,q_2,q_3,q_4$ of Table~\ref{table:cs_singularities}. To complete the proof we only need to make sure, that the points are well defined and real for all $a,b,d$ with $a>d$ and fulfill all polynomials in $I+J$. We can check the second statement easily with a CAS (we used the \texttt{sympy} python library) and so it remains to show the first statement. For $q_1$ and $q_2$ this is clear. So we have to investigate the denominators in the coordinates of $q_3$ and $q_4$.

We will show, that $u := 2\,b\,(a^2 - 3\,d^2) + q\,(a^2 + b^2) \ne 0$ for all real $a,b,d>0$ with $a>d$ and the same statement for the other denominators will follow in the same way. If we consider $u$ as quadratic equation in $b$, the discriminant is 
\[
4\,(a^2 - 3\,d^2)^2 - 4\,q^2 a^2 = a^4 - 6\,d^2\,a^2 + 9d^4 - a^2\,(a^2 + 3\,d^2) = -9\,a^2\,d^2 + 9\,d\,^4 < 0.
\]
So there is no real zero for $u$.

\subsection{The proof of Theorem~\ref{thm:delta}}
For simplicity of notation, we consider the isomorphic system $\tilde{X}_{a,b,d}$ given by the polynomials  $s_i$,$c_j$, $\tilde{l}_{k}$, $i,j=1,2,3$ and $k=1,\ldots, 6$, where
\begin{align*}
\begin{pmatrix}
\tilde{l}_1\\\tilde{l}_2\\\tilde{l}_3
\end{pmatrix}
& = \begin{pmatrix}
d + ca_1 + x_1\\
y_1\\
sa_1 + z_1
\end{pmatrix}
- A 
\begin{pmatrix}
d + ca_2\\
0\\
sa_2
\end{pmatrix}
- 
\begin{pmatrix}
x_2\\y_2\\z_2
\end{pmatrix}, \\
\begin{pmatrix}
\tilde{l}_4\\\tilde{l}_5\\\tilde{l}_6
\end{pmatrix}
& = \begin{pmatrix}
d + ca_1 + x_1\\
y_1\\
sa_1 + z_1
\end{pmatrix}
- A^{-1}
\begin{pmatrix}
d + ca_3\\
0\\
sa_3
\end{pmatrix}
- 
\begin{pmatrix}
x_3\\y_3\\z_3
\end{pmatrix}.
\end{align*}
In addition we define:
\begin{equation*}
m_1(\psi) \coloneqq A
\begin{pmatrix}
d + a \cdot \cos(\psi) \\
0 \\
a \cdot \sin(\psi) 
\end{pmatrix},
\quad
m_2 (\psi) \coloneqq A^{-1} \, 
\begin{pmatrix}
d + a \cdot \cos(\psi) \\
0 \\
a \cdot \sin(\psi) 
\end{pmatrix},
\quad
m_3 (\psi) \coloneqq \, 
\begin{pmatrix}
d + a \cdot \cos(\psi) \\
0 \\
a \cdot \sin(\psi) 
\end{pmatrix}.
\end{equation*}
and $m(\psi) \coloneqq \bigl(a \cos(\psi), a \sin(\psi) \bigr)$. So we have the following simple characterization for points in $\tilde{X}_{a,b,d}$. 

\begin{lemma}\label{lemma:x_characterization}
Let $p = (x,y,z) \in \R^3$ and $\psi_i \in \R$, for $i=1,2,3$, then it is
\begin{equation}\label{eq:char_conf}
\begin{aligned}
 Q(p,\psi_1,\psi_2,\psi_3) & \coloneqq \bigl(p - m_1(\psi_1), p - m_2(\psi_2), p - m_3(\psi_3), m(\psi_1), m(\psi_2), m(\psi_3) \bigr) \in \tilde{X}_{a,b,d} \\
& \phantom{:::=} \text{iff $\abs{p - m_i(\psi_i)}^2 = b^2$, for $i=1,2,3$}. 
\end{aligned}
\end{equation}
\end{lemma}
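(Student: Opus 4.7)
My plan is to prove Lemma~\ref{lemma:x_characterization} by direct substitution of $Q(p,\psi_1,\psi_2,\psi_3)$ into each of the three families of defining polynomials for $\tilde X_{a,b,d}$, namely the sphere polynomials $s_i$, the circle polynomials $c_j$, and the loop-closure polynomials $\tilde l_k$. The goal is to show that the $c_j$ and $\tilde l_k$ vanish identically in $(p,\psi_1,\psi_2,\psi_3)$, so that membership of $Q$ in $\tilde X_{a,b,d}$ is equivalent to the vanishing of the $s_i$, which will translate precisely into the three sphere conditions stated in \eqref{eq:char_conf}.

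The circle polynomials are immediate: substituting $(ca_j,sa_j)=m(\psi_j)=(a\cos\psi_j,a\sin\psi_j)$ into $c_j=ca_j^2+sa_j^2-a^2$ yields $a^2\cos^2\psi_j+a^2\sin^2\psi_j-a^2=0$ by the Pythagorean identity, identically in $\psi_j$.

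The loop-closure polynomials $\tilde l_k$ are the substantive step, and here is where the parametrization $Q$ really does its work: it is engineered so that $p$ plays the role of the common endpoint of the three limbs, and consequently the expression ``arm-tip plus offset'' appearing in each $\tilde l_k$ block collapses to $p$. Concretely, under the substitution $(ca_1,sa_1)=m(\psi_1)$ the first vector $(d+ca_1+x_1,\,y_1,\,sa_1+z_1)$ equals $m_3(\psi_1)+(x_1,y_1,z_1)$, and $A(d+ca_2,0,sa_2)^{T}$ equals $Am_3(\psi_2)=m_1(\psi_2)$; plugging in $(x_i,y_i,z_i)=p-m_i(\psi_i)$ and collecting terms, I expect the three coordinates of $(\tilde l_1,\tilde l_2,\tilde l_3)$ to reduce to zero using the defining relations $m_1=Am_3$, $m_2=A^{-1}m_3$ together with the rotation algebra $A^3=I$. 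An analogous calculation with $A^{-1}$ in place of $A$ handles $(\tilde l_4,\tilde l_5,\tilde l_6)$.

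Finally, for the sphere polynomials, substituting $(x_i,y_i,z_i)=p-m_i(\psi_i)$ into $s_i=x_i^2+y_i^2+z_i^2-b^2$ gives $s_i(Q)=\lvert p-m_i(\psi_i)\rvert^2-b^2$, so that $s_i(Q)=0$ is literally $\lvert p-m_i(\psi_i)\rvert^2=b^2$. Combining the three items yields the stated equivalence. The only real work is the bookkeeping inside the $\tilde l_k$ verification, which I expect to be the main obstacle only in the sense of care with matrix-vector products and index conventions; conceptually there is no difficulty, since $Q$ is precisely the parametrization that realizes $p$ as the common limb endpoint and therefore trivializes the loop-closure constraints by construction.
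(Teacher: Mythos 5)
Your overall strategy --- substitute $Q$ into the three families $s_i$, $c_j$, $\tilde l_k$, observe that the $c_j$ and $\tilde l_k$ vanish identically, and read off the sphere conditions from the $s_i$ --- is exactly the verification the paper asserts (its proof is a single sentence saying the check is easy), and your treatment of the $c_j$ and $s_i$ is fine. The gap is in the one step you defer with ``I expect \dots to reduce to zero'': with the pairing you use, $(x_i,y_i,z_i)=p-m_i(\psi_i)$ for $i=1,2,3$, the loop-closure polynomials do \emph{not} cancel. Carrying out your own substitution on the first block gives
\[
\begin{pmatrix}\tilde l_1\\ \tilde l_2\\ \tilde l_3\end{pmatrix}
= \bigl(m_3(\psi_1) + p - m_1(\psi_1)\bigr) - m_1(\psi_2) - \bigl(p - m_2(\psi_2)\bigr)
= (I-A)\,m_3(\psi_1) + (A^{-1}-A)\,m_3(\psi_2),
\]
which is not identically zero (already at $\psi_1=\psi_2=0$ it equals $(d+a)\bigl(\tfrac32,-\tfrac{3\sqrt3}{2},0\bigr)^T$); the block $(\tilde l_4,\tilde l_5,\tilde l_6)$ fails in the same way.

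The reason is an index mismatch: in the defining polynomials limb $1$ carries no rotation, limb $2$ is rotated by $A$ and limb $3$ by $A^{-1}$, so the upper-arm tips in the global frame are $m_3(\psi_1)$, $A\,m_3(\psi_2)=m_1(\psi_2)$ and $A^{-1}m_3(\psi_3)=m_2(\psi_3)$, not $m_1(\psi_1)$, $m_2(\psi_2)$, $m_3(\psi_3)$. The collapse of the $\tilde l_k$ that your argument relies on happens only for the pairing $(x_1,y_1,z_1)=p-m_3(\psi_1)$, $(x_2,y_2,z_2)=p-m_1(\psi_2)$, $(x_3,y_3,z_3)=p-m_2(\psi_3)$, under which both $\tilde l$-blocks are identically zero and the $s_i$ become $\abs{p-m_3(\psi_1)}^2=\abs{p-m_1(\psi_2)}^2=\abs{p-m_2(\psi_3)}^2=b^2$. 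So you must either prove the lemma with the $m_j$ permuted in this way (which is evidently what is intended, and is harmless downstream since the same three sphere conditions appear, just attached to different $\psi$'s), or record that the statement as literally printed is false. As written, the deferred cancellation is precisely the point where your proof cannot be completed.
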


Statement~\eqref{eq:char_conf} can be checked easily by verifying that $Q(p,\psi_1,\psi_2,\psi_3)$ fulfills all polynomials $c_i,s_j,\tilde{l}_k$, iff 
$\abs{p - m_i(\psi_i)}^2 = b^2$, for $i=1,2,3$. Furthermore it is clear, that every point
$q \in \tilde{X}_{a,b,d}$ can uniquely be represented in the form $q = Q(p,\psi_1,\psi_2,\psi_3)$, for a $p \in \R^3$ and $\psi_i \in \R$, $i=1,2,3$. We'll write $p(q)$ and $\psi_i(q)$ to reference those coordinates.

Finally, by abuse of notation we will use $q_1, \ldots, q_4$ to identify the points in the transformed configuration space $\tilde{X}_{a,b,d}$ corresponding to $q_i$ from table~\ref{table:cs_singularities}. We only need to show the theorem for $q_1, \ldots, q_4$ and will do this only for $q_1$ and $q_4$, since the assertion for $q_2$ and $q_3$ will follow analogously as in the case of $q_1$ and $q_4$ respectively. 

The idea of the proof is simple. We find four analytical paths $\gamma$ in $\tilde{X}$ with $\gamma(t_0) = q_i$ such that
the tangent vectors $\gamma'(t_0)$ will span a 4-dimensional subspace of $\R^{15}$, Since $\dim \tilde{X}_{a,b,d} = 3$ according to assumption, this is a contradiction and $\tilde{X}_{a,b,d}$ can't be 
a manifold locally at $q_i$. The constructed paths will offer some insights in the kinematic properties of the Delta Platform in the singular configurations.

\subsubsection{Singularities $q_4$ (and $q_3$)} 
We fix $\ph_i = \psi_i(q_4)$ for $i=1,2,3$ and $p_0 \coloneqq p(q_4)$. Then it is $\ph \coloneqq \ph_1 = \ph_2$ and $m_1(\ph_1) = m_2(\ph_2)$, $m_3(\ph_3)$ and $p_0$ all lie in the $xz$-plane. We will also define $S_i(t)$ to be the sphere with radius $b$ and center $m_i(t)$, so that
\[
p_0 \in S_1(\ph_1) \cap S_2(\ph_2) \cap S_3(\ph_3)
\]

\paragraph{First Path $\gamma_1$:} 
It is $S_1(\ph) = S_2(\ph)$ and for $t \ne \ph$ close to $\ph$ it is $K_1(t) \coloneqq S_1(t) \cap S_2(t)$ a circle in the $xz$-plane. We will denote the center of $K_1(t)$ with $M_1(t)$ and its radius with $r_1(t)$. Both $M_1(t)$ and $r_1(t)$ clearly admit analytic continuations for $t=\ph$ (You can find expressions for most terms in table~\ref{table:formula}).
\begin{table}[ht]
	\renewcommand{\arraystretch}{1.5}
	\centering
	\caption{geometric constraints}\label{table:formula}
		\begin{tabular}{@{}ccccc} \toprule
Variable & Term \\  \midrule
$p(q_4)$ & $\left(\frac{2bd}{\sqrt{a^2 + 3d^2}}, 0, \sqrt{a^2 - d^2} + \frac{b\sqrt{a^2 - d^2}}{a^2 + 3d^2} \right)^T$\\
$S_1(t)$ & $\left(-\frac{d}{2} - \frac{a \cos(t)}{2}, \sqrt{3}\cdot \frac{d + a \cos(t)}{2}, a \sin(t)\right)^T$\\
$S_2(t)$ & $\left(-\frac{d}{2} - \frac{a \cos(t)}{2}, -\sqrt{3}\cdot \frac{d + a \cos(t)}{2}, a \sin(t)\right)^T$\\
$S_3(t)$ & $\left(d + a \cos(t),0,a \sin(t) \right)^T$\\
$M_1(t)$ & $\left( -\frac{d}{2} - \frac{a \cos(t)}{2}, 0, a \sin(t) \right)^T$ \\
$r_1(t)$ & $\sqrt{-\frac{3 \, a^2 \cos^2(t)}{4} - \frac{3\, ad\cos(t)}{2} - \frac{3d^2}{4} + b^2}$ \\ \bottomrule
	\end{tabular}

\end{table}
As one can check quickly it is $m_3(\ph_3)$ the reflection of $m_1(\ph) = m_2(\ph)$ across the axis through the origin and $p$, as in figure~\ref{fig:reflection}.
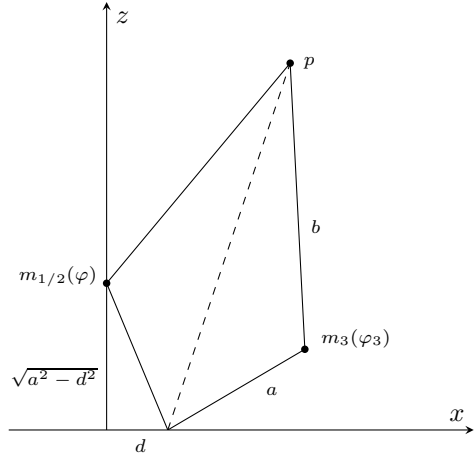
\begin{figure}[h]\label{fig:reflection}
	\centering
	\begin{tikzpicture}
\begin{axis}[xmin=-0.8,xmax=3, ymin=0, ymax=3.5, unit vector ratio*=1 1 1, axis lines=center, axis on top, ytick=\empty, xtick=\empty, xlabel={$x$}, ylabel={$z$}]

\addplot [-] coordinates{(0.5,0)(0,1.2)(1.5,3)};
\addplot [-] coordinates{(0.5,0)(1.62,0.66)(1.5,3)};
\addplot [-,dashed] coordinates{(0.5,0)(1.5,3)};

%\addplot[smooth, dashed, domain=0.89:6]%

%\addplot[blue!50] fill between[of=RU and RO];
\node[label={0:{\scriptsize $p$}}, circle,fill,inner sep=1pt] at (axis cs:1.5,3) {};
\node[label={0:{\scriptsize $m_3(\ph_3)$}}] at (axis cs:1.6,0.75) {};
\node[circle,fill,inner sep=1pt] at (axis cs:0,1.2) {};
\node[circle,fill,inner sep=1pt] at (axis cs:1.62,0.66) {};
%\node[label={180:{\scriptsize $l$}}] at (axis cs:-2.15,-0.75) {};
\end{axis}		
\node[anchor=north] at (1.75,0) {{\scriptsize $d$}};
\node[anchor=east] at (1.3,0.7) {{\scriptsize $\sqrt{a^2 - d^2}$}};
\node[anchor=east] at (1.3,2.05) {{\scriptsize $m_{1/2}(\ph)$}};
\node[anchor=north west] at (3.3,0.7) {{\scriptsize $a$}};
\node[anchor=south west] at (3.9,2.5) {{\scriptsize $b$}};
	\end{tikzpicture}\caption{Reflection of $m_3(\ph_3)$}
\end{figure}
This means, that the intersection of $K_1(\ph)$ with $S_3(\ph_3)$ is nonsingular as long as
\[
p - m_{1/2}(\ph) = \begin{pmatrix}
\frac{2bd}{\sqrt{a^2} + 3d^2} & 0 & \frac{b \sqrt{a^2 - d^2}}{\sqrt{a^2 + 3d^2}}
\end{pmatrix}^T
\]
is not perpendicular to the position vector of
\[
p - \begin{pmatrix}
d & 0 & 0
\end{pmatrix}^T = 
\begin{pmatrix}
\frac{2bd}{\sqrt{a^2} + 3d^2} - d & 0 & \sqrt{a^2 - d^2} + \frac{b \sqrt{a^2 - d^2}}{\sqrt{a^2 + 3d^2}}
\end{pmatrix}^T,
\] 
which is equivalent to
\[
b\,(a^2 + 3d^2) + \sqrt{a^2 + 3d^2}\,(a^2 - 3d^2) = 0,
\]
or
\begin{equation}\label{eq:perpendicular_intersection}
b = \frac{3d^2 - a^2}{\sqrt{a^2 + 3d^2}}.
\end{equation}
which we excluded in our assumption. 

Accordingly the intersection $K_1(\ph) \cap S_3(\ph_3)$ is nonsingular and due to the analytic implicit function theorem, we find an analytic
path $\delta_1 \colon (\ph - \eps, \ph + \eps) \to \R^3$, with 
\begin{equation}\label{eq:delta}
\delta_1(t) \in K(t) \cap S_3(\ph_3) \subset S_1(t) \cap S_2(t) \cap S_3(\ph_3).
\end{equation}
and $\delta_1(\ph) = p$. We now set $\gamma_1(t) \coloneqq Q(\delta_1(t),t,t,\ph^*)$ 
and according to \eqref{eq:delta} and \eqref{eq:char_conf} it is $\gamma_1(t) \in \tilde{C}_D$, for all $t$.
We immediately check $\gamma_1(\ph) = q_4$ and
\[
\gamma_1'(\ph) = \begin{pmatrix}
*\\
\vdots \\
* \\
-\sqrt{(a^2 - d^2)} \\
-d\\
-\sqrt{(a^2 - d^2)} \\
-d \\
0\\
0
\end{pmatrix}.
\]

\begin{remark}
It can be verified, that $\delta'(\ph) = 0$, so we are able to state $\gamma_1'(\ph)$ explicitly. This is not needed for the proof however.
\end{remark}	

\paragraph{Second Path $\gamma_2$:} Analogously to the first path, we find an analytic path 
$\delta_2 \colon (\ph_3-\eps, \ph_3 + \eps) \to \R^3$, with
\begin{equation}\label{eq:delta2}
\delta_2(t) \subset S_1(\ph) \cap S_2(\ph) \cap S_3(t),
\end{equation}
and $\delta_2(\ph_3) = p$. Hence $\gamma_2(t) \coloneqq Q(\delta_2(t),\ph,\ph,t) \in \tilde{C}_D$, $\gamma_2(\ph_3) = q_4$ and
we calculate: 
\[
\gamma_2'(\ph) = \begin{pmatrix}
*\\
\vdots \\
* \\
0 \\
0 \\
0 \\
0 \\
-\frac{6bd^2 \sqrt{a^2-d^2} (q - 2b)}{2bq(a^2-3d^2) + q^2(a^2 + b^2)} \scriptstyle{+\sqrt{a^2-d^2}} \\
\frac{6bd(a^2 - d^2)(b + q)}{2bq(a^2 - 3d^2) + q^2 (a^2 + b^2) } \scriptstyle{- d}
\end{pmatrix}.
\]
Note that the last two entries of $\gamma_2'(\ph_3)$ are well defined and not both zero since the sum of their squares is $1$.

\paragraph{Third Path $\gamma_3$:} 
Since we excluded the case \eqref{eq:perpendicular_intersection}, the intersection of $S_1(\ph) \cap S_3(\ph_3) = S_2(\ph) \cap S_3(\ph_3)$ must be a circle
which can be parameterized around $p$, i.e. it exists an analytical path $\delta_3 \colon (-\eps,\eps) \to \R^3$, with 
\[
\delta_3(t) \subset S_1(\ph) \cap S_2(\ph) \cap S_3(\ph_3),
\]
and $\delta_3(0) = p$. Clearly for $\gamma_3(t) \coloneqq Q(\delta_3(t),\ph,\ph,\ph_3) \in \tilde{C}_D$, it is $\gamma_3(0) = q_4$ and 
\[
\gamma_3'(0) = \begin{pmatrix}
*\\
\vdots \\
* \\
0 \\
0 \\
0 \\
0 \\
0 \\
0
\end{pmatrix}.
\]

\paragraph{Fourth Path $\gamma_4$:} We will now attempt to find a path in $S_1(t) \cap S_2(\ph) \cap S_3(\ph_3)$, for $t$ close to $\ph$.
Clearly it is $K_2(t) = S_1(t) \cap S_2(\ph)$ a circle for $t \ne \ph$ close to $\ph$. We denote the center of $K_2(t)$ as $M_2(t)$, the radius
of $K_2(t)$ as $r_2(t)$ and the normal 
\[
\frac{m_1(t) - m_2(\ph)}{\abs{m_1(t) - m_2(\ph)}}
\]
of the circle plane as $n(t)$. Some elementary considerations (compare Lemma~\ref{lemma:circle_intersection}) show, that $M_2(t)$, $r_2(t)$ and $n(t)$ admit an 
analytic continuation at $t = \ph$, with
\[
n(\ph) = M'(\ph) = m_1'(\ph), \quad r_2'(t) = 0.
\]
Now we show, that $p \in K_2(\ph) \cap S_3(\ph_3)$, where $K_2(\ph)$ means the circle associated to the analytic continuations of $M_2$,$r_2$ and $n$.
Since $p \in S_1(\ph) \cap S_3(\ph)$ and $K_2(\ph) \subset S_1(\ph)$ it suffices to show, that $p$ and $K_2(\ph)$ lie in the same plane. i.e.
$p - m_1(\ph) \perp n(\ph) = m_1'(\ph)$, but with $\ph = \pi - \arctan(\frac{\sqrt{a^2 - d^2}}{d})$ we see
\[
m_1'(\ph) = \begin{pmatrix}
\frac{\sqrt{a^2 - d^2}}{2} \\ - \frac{\sqrt{3}}{2} \, \sqrt{a^2 - d^2} \\ d
\end{pmatrix}^T
\]
and we easily check $(p - m_1(\ph)) \cdot m_1'(\ph) = 0$. 
Since we excluded the case \eqref{eq:perpendicular_intersection} it must be $K_2(t) \cap S_3(\ph_3)$ a nonsingular intersection for $t = \ph$. Because
$p \in K_2(\ph) \cap S_3(\ph_3)$ there must be an analytic path $\delta_4 \colon (\ph-\eps,\ph + \eps) \to \R^3$, with 
$\delta_4(\ph) = p$. Hence it is $\gamma_4(t) \coloneqq Q(\delta_4(t),t,\ph,\ph_3) \in \tilde{X}$ with $\gamma_4(\ph) = q_4$ and we check:
\[
\gamma_4'(\ph) = \begin{pmatrix}
*\\
\vdots \\
* \\
-\sqrt{(a^2 - d^2)} \\
-d \\
0 \\
0 \\
0 \\
0
\end{pmatrix}.
\]
Clearly we have $\dim \langle \gamma_1'(\ph), \gamma_2'(\ph_3), \gamma_3'(0), \gamma_4'(\ph) \rangle = 4$, what we wanted to show. 

\subsubsection{Singularities $q_1$ (and $q_2)$}
We fix again $\ph_i = \psi_i(q_1)$ for $i=1,2,3$ and $p_0 \coloneqq p(q_1)$.
It is 
\[
\ph_1 = \ph_2 = \ph_3 = \arctan\left(-\frac{\sqrt{a^2 - d^2}}{d}\right) + \pi 
\]
and
\[
m_1(\ph_1) = m_2(\ph_2) = m_3(\ph_3)
= \begin{pmatrix}
0\\
0\\
\sqrt{a^2 - d^2}.
\end{pmatrix}. 
\]

\paragraph{First and second path $\gamma_1$, $\gamma_2$:}
For $p \in S_1(\ph_1) = S_2(\ph_2) = S_3(\ph_3)$ it is obviously $Q(p,\ph_1,\ph_2,\ph_3) \in \tilde{X}$, hence we find paths $\gamma_1$,$\gamma_2$ in $\tilde{X}$,
with $\gamma_1(0) = \gamma_2(0) = q_1$ and
\[
\gamma_1'(0) = \begin{pmatrix}
0\\
1 \\
0 \\
0 \\
1 \\
0 \\
0 \\
1 \\
0 \\
0 \\
\vdots \\
0
\end{pmatrix}, \qquad
\gamma_2'(0) = \begin{pmatrix}
-\sqrt{a^2 - d^2}\\
0 \\
2\,d \\
-\sqrt{a^2 - d^2}\\
0 \\
2\,d \\
-\sqrt{a^2 - d^2}\\
0 \\
2\,d \\
0 \\
\vdots \\
0
\end{pmatrix}, \qquad
\]

\paragraph{Third Path $\gamma_3$:} 
Like in the first path it is $K_1(t) \coloneqq S_1(t) \cap S_2(t)$ a circle in the $xz$-plane. Now we donate with $K_2$ the circle given by the intersection
of $S_3(\ph_3)$ with the $xz$-plane. we have 
\[
M_1'(\ph_1) \cdot (p - M_1(\ph))= 
\begin{pmatrix}
\frac{\sqrt{a^2 - d^2}}{2} \\
0 \\
-d
\end{pmatrix}
\cdot 
\begin{pmatrix}
\frac{2db}{\sqrt{a^2 + 3d^2}}\\
0 \\
\frac{\sqrt{a^2 - d^2} \, b}{\sqrt{a^2 + 3\,d^2}}
\end{pmatrix}
= 0
\]
Hence, according to Lemma~\ref{lemma:circle_intersection} we find an analytic path
$\delta \colon (-\eps,\eps) \to \R^3$ with $\delta(t) \in K_1(t) \cap K_2(t)$ and 
$\delta(0) = p$. This means that $\gamma_3(t) \coloneqq Q(\delta(t),t+\ph,t+\ph,0) \in \tilde{X}$, with
\[
\gamma_3'(0) = \begin{pmatrix}
*\\
\vdots \\
* \\
-\sqrt{(a^2 - d^2)} \\
-d\\
-\sqrt{(a^2 - d^2)} \\
-d \\
0\\
0
\end{pmatrix}.
\]

\paragraph{Fourth Path $\gamma_4$:}
According to Corollary~\ref{cor:sphere_intersection} we can find an analytic path $\delta \colon (\ph-\eps,\ph+\eps) \to \R^3$, with $\delta(t) \in S_1(t) \cap S_2(\ph) \cap S_3(\ph)$ and $\delta(\ph) = p$,
if $p - m_1(\ph) \perp m_1'(\ph)$, but
\[
m_1'(\ph) \cdot (p - m_1(\ph)) = 
\begin{pmatrix}
\frac{\sqrt{a^2 - d^2}}{2} \\
\frac{\sqrt{3}}{2} \sqrt{a^2 - d^2}  \\
-d
\end{pmatrix}
\cdot
\begin{pmatrix}
\frac{2db}{\sqrt{a^2 + 3d^2}}\\
0 \\
\frac{\sqrt{a^2 - d^2} \, b}{\sqrt{a^2 + 3\,d^2}}
\end{pmatrix}
= 0.
\]

So we set again $\gamma_4(t) \coloneqq Q(\delta(t),t,0,0) \in \tilde{X}$, with 
\[
\gamma_4'(\ph) = \begin{pmatrix}
*\\
\vdots \\
* \\
-\sqrt{(a^2 - d^2)} \\
-d \\
0 \\
0 \\
0 \\
0
\end{pmatrix}.
\]
We now have $\dim \langle \gamma_1'(0), \gamma_2'(0), \gamma_3'(0), \gamma_4'(\ph) \rangle = 4$ again.
Since all non-manifold points are singularities, the 24 considered points give the full set of configuration space singularities, if $P_{a,b,d} = S_{a,b,d}$. This completes the proof of theorem~\ref{thm:delta}.

%\begin{lemma}
%Let $r_0,r_1,p_x,p_y \colon \R \to \R$ be analytic on a neighborhood of the origin, with
%\[
%r \coloneqq r_0(0) = r_1(0) > 0,\ r_0'(0) = r_1'(0) = 0 \quad \text{and} \quad p_x(0) = p_y(0)= 0, \ p_y'(0)^2 + p_y'(0)^2 \ne 0.
%\]
%We consider the intersection of circles parameterized by $t$:
%\begin{equation}\label{eq:circle_intersection}
%\begin{aligned}
%x^2 + y^2 - r_0(t) & = 0\\
%(x - p_x(t))^2 + (y - p_y(t))^2 - r_1(t) &= 0
%\end{aligned}
%\end{equation}
%Then there exists an analytic path $\gamma(t) = (x(t),y(t)$, $|t| < \eps$, fulfilling \eqref{eq:circle_intersection}, with
%\[
%\gamma(0) = \pm \frac{r}{\sqrt{p_x'(0)^2 + p_y'(0)^2}} \begin{pmatrix} 
%-p_y'(0)\\
%p_x'(0)
%\end{pmatrix}
%\]
%\end{lemma}

\begin{lemma}\label{lemma:circle_intersection}
Let $r,p_x,p_y\colon \R \to \R$ be analytic on a neighborhood of the origin, with
\[
r(0) := r_0 >0, \ r'(0) = 0 \quad \text{and} \quad (p_x(0), p_y(0)) = (0,0),\ (p_x'(0), p_y'(0)) \ne (0,0).
\]
We consider the intersection 
\begin{equation}\label{eq:circle_intersection}
\begin{aligned}
x^2 + y^2 - r_0^2 & = 0,\\
(x - p_x(t))^2 + (y - p_y(t))^2 - r(t)^2 & = 0,
\end{aligned}
\end{equation}
There exists two analytic paths $\gamma_{1/2}(t) = (x(t),y(t))$, $|t| < \eps$, fulfilling \eqref{eq:circle_intersection}, with
\[
\gamma_{1/2}(0) = \pm \frac{r_0}{\sqrt{p_x'(0)^2 + p_y'(0)^2}} \begin{pmatrix} 
-p_y'(0)\\
p_x'(0)
\end{pmatrix} =: b_{\pm}.
\]
\end{lemma}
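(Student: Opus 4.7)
The plan is to replace the original system of two nearly coincident circles (which actually coincide at $t=0$, making the full Jacobian drop rank there) by an equivalent system consisting of the fixed circle together with a single \emph{linear} equation in $(x,y)$ with analytic $t$-dependent coefficients. Once this reduction is performed, the analytic implicit function theorem will produce the two desired paths directly from the two transverse intersection points of the limiting line and circle at $t=0$.

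Concretely, I would subtract the first equation of \eqref{eq:circle_intersection} from the second to obtain the equivalent pair
\[
x^2 + y^2 - r_0^2 = 0, \qquad -2\,x\,p_x(t) - 2\,y\,p_y(t) + p_x(t)^2 + p_y(t)^2 - \bigl(r(t)^2 - r_0^2\bigr) = 0.
\]
The hypotheses $p_x(0) = p_y(0) = 0$ and $r'(0) = 0$ imply that every summand of the second equation is divisible by $t$ in the ring of convergent power series at $0$: writing $p_x(t) = t\,\hat{p}_x(t)$ and $p_y(t) = t\,\hat{p}_y(t)$ with $\hat{p}_x(0) = p_x'(0)$ and $\hat{p}_y(0) = p_y'(0)$, one has $p_x^2 + p_y^2 = t^2(\hat{p}_x^2 + \hat{p}_y^2)$, while $r(t)^2 - r_0^2 = (r(t)-r_0)(r(t)+r_0)$ vanishes to order at least $2$ at $0$ because $r'(0) = 0$. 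Dividing the second equation by $t$ therefore produces an analytic equation $H(x,y,t) = 0$ with $H(x,y,0) = -2\bigl(x\,p_x'(0) + y\,p_y'(0)\bigr)$.

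At $t = 0$ the reduced system $\{x^2 + y^2 = r_0^2,\, H = 0\}$ is the intersection of the circle of radius $r_0$ with the line through the origin normal to $(p_x'(0), p_y'(0))$, whose two intersection points are precisely $b_\pm$. A direct computation shows that the Jacobian determinant of $(x^2 + y^2 - r_0^2,\, H)$ with respect to $(x,y)$ at each $(b_\pm, 0)$ equals $\pm 4\,r_0\sqrt{p_x'(0)^2 + p_y'(0)^2}$, which is nonzero by the assumption $(p_x'(0), p_y'(0)) \ne (0,0)$. The analytic implicit function theorem then yields two analytic paths $\gamma_{1/2}(t)$, defined on a neighborhood of $0$, with $\gamma_{1/2}(0) = b_\pm$, solving the reduced system and hence the original system \eqref{eq:circle_intersection}.

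The only nonroutine step is the factorisation of $t$ out of the subtracted equation; it is exactly the hypothesis $r'(0) = 0$ that prevents $r(t)^2 - r_0^2$ from contributing a nonzero linear term in $t$, which would otherwise shift the limiting line off the origin and destroy the symmetric picture of two antipodal intersection points. Granting this divisibility, everything reduces to an application of the analytic implicit function theorem at two transverse intersections, which is entirely routine.
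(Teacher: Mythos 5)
Your proof is correct, but it takes a genuinely different route from the paper. The paper writes down the classical closed-form parametrization of the two intersection points: with $d(t)=\sqrt{p_x(t)^2+p_y(t)^2}$ the distance between centers and $l(t)$ the signed distance to the radical line, it sets $q_{\pm}(t)=\tfrac{l}{d}(p_x,p_y)^T\pm\tfrac{\sqrt{r_0^2-l^2}}{d}(-p_y,p_x)^T$, and then has to do real work to show that these expressions --- a priori singular at $t=0$ because $d(t)\to 0$ --- extend analytically: it passes to holomorphic extensions of auxiliary functions $f_1,\dots,f_4$, removes their singularities at the origin, chooses branches of the square root, and computes the limits. Your argument replaces all of this by normalizing the radical-line equation: subtracting the two circle equations, observing that the hypotheses $p_x(0)=p_y(0)=0$ and $r'(0)=0$ make every term divisible by $t$, dividing by $t$, and applying the analytic implicit function theorem at the two transverse intersections of the limiting line $x\,p_x'(0)+y\,p_y'(0)=0$ with the fixed circle. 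The key logical point --- that solutions of the rescaled system are solutions of the original one for all $t$, even though the two systems are only equivalent for $t\neq 0$ --- is handled correctly, and the Jacobian computation $\det = 4\bigl(y\,p_x'(0)-x\,p_y'(0)\bigr)=\pm 4r_0\sqrt{p_x'(0)^2+p_y'(0)^2}\neq 0$ at $b_{\pm}$ is right. What your approach buys is brevity and robustness: it avoids the coordinate normalization ($p_x'(0)\neq 0$, $p_y'(0)\neq 0$), the removable-singularity analysis, and the branch bookkeeping. What the paper's approach buys is an explicit formula for the paths $q_{\pm}(t)$, which is not needed for the statement of the lemma but makes the limiting values transparent.
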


\begin{proof}
We choose a coordinate system in such a way, that we can assume $p_x'(0) \ne 0$, $p_y'(0) \ne 0$. We set $d(t) \coloneqq \sqrt{p_x(t)^2 + p_y(t)^2}$ and $l(t) \coloneqq \frac{{r_0}^2 + p_x(t)^2 + p_y(t)^2 - r(t)^2}{2\,d(t)}$. 
\begin{figure}[h]
	\centering
		\begin{tikzpicture}
		\begin{axis}[name=first, xmin=-1.5,xmax=5,ymin=-1.7, ymax=2.5, unit vector ratio*=1 1 1, axis lines=center, axis on top, ytick=\empty, xtick=\empty, xlabel={$x$}, ylabel={$z$}]
		
		%\addplot[name path=MU, black, thick, no markers] coordinates {(-2,-1) (2,-1)};
		
		\addplot [-] coordinates{(0,0) (2,0.4)};
		\addplot[smooth, thick, domain=0:360]%
		({cos(x)}, {sin(x)});
		\addplot[smooth, thick, domain=0:360]%
		({2+1.4*cos(x)}, {0.4 + 1.4*sin(x)});
		
		\addplot[] coordinates{(0,0) (0.655,0.76)};
		\addplot[] coordinates{(0.655,0.76)(2,0.4)};
		\addplot[very thin, dashdotted, shorten >= 7.6mm, shorten <= -10mm] coordinates{(0.655,0.76) (0.9,-0.46)};
		\addplot[very thin, dashdotted, shorten >= -10mm, shorten <= 7.6mm] coordinates{(0.655,0.76) (0.9,-0.46)};
		%\addplot[smooth, dashed, domain=0.89:6]%
		\addplot [-|, thick, shorten >= 13mm] coordinates{(0,0) (2,0.4)};
		\node[label={0:{\scriptsize $l$}}] at (axis cs:0.1,0.24) {};	
		
		%\addplot[blue!50] fill between[of=RU and RO];
		\node[label={0:{\scriptsize $(p_x,p_y)$}}, circle,fill,inner sep=1pt] at (axis cs:2,0.4) {};
		\node[label={0:{\scriptsize $r(t)$}}] at (axis cs:1.1,0.75) {};
		\node[circle,fill,inner sep=1pt] at (axis cs:0,0) {};
		%\node[label={180:{\scriptsize $l$}}] at (axis cs:-2.15,-0.75) {};

		\end{axis}
		\end{tikzpicture}%
\end{figure}

\noindent Furthermore let (for $t \ne 0$)
\begin{align*}
q_{\pm}(t) \coloneqq & \frac{l(t)}{d(t)} \cdot \begin{pmatrix} p_x(t) \\ p_y(t) \end{pmatrix} \pm \frac{\sqrt{r_0^2 - l(t)^2}}{d(t)} \cdot 
\begin{pmatrix}
-p_y(t) \\ p_x(t)
\end{pmatrix} \\
 = & -\frac{1}{2} \left( \frac{r_0^2 - r(t)^2}{d(t)^2} + 1 \right) \begin{pmatrix} p_x(t) \\ p_y(t) \end{pmatrix} \pm  \frac{\sqrt{r_0^2 - l(t)^2}}{d(t)} \cdot 
\begin{pmatrix}
-p_y(t) \\ p_x(t)
\end{pmatrix}.
\end{align*}
One easily checks, that $q(t)_{\pm}$ fulfills the system \eqref{eq:circle_intersection}. 
We will need to show, that $q(t)$ can be continued analytically around $0$, that $q(t) \in \R$, for $t$ small enough, and that
either $q_+(t) \to b_+$ and $q_-(t) \to b_-$, for $t \to 0$. 

Since $p_x,p_y$ and $r$ are analytical around $0$ we can extend them to holomorphic 
functions on a small neighborhood of $0$ in $\C$. We set:
\begin{align*}
f_1(z) & \coloneqq 
\frac{r_0^2 - r(z)^2}{p_x(z)^2 + p_y(z)^2}.\\
f_2(z) & \coloneqq l^2(z).\\
f_3(z) & \coloneqq \frac{p_x^2(z)}{p_x^2(z) + p_y^2(z)}.\\
f_4(z) & \coloneqq \frac{p_y^2(z)}{p_x^2(z) + p_y^2(z)}.
\end{align*}
Since the origin can't be a limit point for the zeros of $p_x(z)^2 + p_y(z)^2$, we find an $\eps > 0$, such that $f_1, \ldots f_4$ are analytic on $B_{\eps} \backslash 0$.
Assume we have shown, that $f_1, \ldots, f_4$ admit analytic continuations on $B_{\eps}$, for which we write $f_1, \ldots f_4$ again. 
Assume also, that $f_1(0) =: b \in \R$, $f_2(0) = 0$ and $f_3(0), f_4(0) > 0$. With the main branch of logarithm we can define analytic functions: 
\begin{align*}
g_2(z) &\coloneqq \sqrt{r_0 - f_2(z)},\\
g_3(z) &\coloneqq \sqrt{f_3(z)},\\
g_4(z) & \coloneqq \sqrt{f_4(z)}.
\end{align*}
Now we can assume, that $g_3(t) = \frac{p_x(t)}{\sqrt{p_x(t) + p_y(t)^2}}$ and $g_4(t) = \frac{-p_y(t)}{\sqrt{p_x(t) + p_y(t)^2}}$, for $t \in \R$, otherwise
multiply with $-1$. Hence it is 
\[
q_{\pm}(t)
= -\frac{1}{2} (f_1(t) + 1) \begin{pmatrix}
p_x(t)\\
p_y(t)
\end{pmatrix}
\pm g_2(t)
\begin{pmatrix}
g_4(t)\\
g_3(t)
\end{pmatrix}.
\]
an analytic function for $t$ small enough. As $f_2(z) \to 0$ for $z \to 0$ and $r_0 > 0$, we have $g_2(t) \in \R$, for $t$ small enough and it follows
$q_{\pm}(t) \in \R$ for $t$ small enough. Moreover it is
\begin{align*}
\lim_{t \to 0} q_\pm (t) 
& = -\frac{1}{2} (b +1) \begin{pmatrix}
0\\
0
\end{pmatrix}
\pm g_2(0) \begin{pmatrix}
g_4(0)\\ 
g_3(0)
\end{pmatrix}
\end{align*}
We will see shortly, that $g_3(0) = \frac{p_x'(0)}{\sqrt{p_x'(0)^2 + p_y'(0)^2}}$
$g_4(0) = \frac{-p_y'(0)}{\sqrt{p_x'(0)^2 + p_y'(0)^2}}$, hence 
\[
\lim_{t \to 0} q_\pm (t) = \pm r_0 \cdot \frac{1}{\sqrt{p_x'(0)^2 + p_y'(0)^2}}
\begin{pmatrix}
-p_y'(0)\\
p_x'(0)
\end{pmatrix}.
\]
We still need to show, that $f_1, \ldots f_4$ admit analytic continuations on $B_{\eps}$ with values according to our earlier assumption. We will do
that only for $f_3$, since we can show the rest similar. Since
\begin{gather*}
(p_y^2 + p_x^2)''(0)  = 2 \, (p_x'(0)^2 + p_y'(0)^2)  + 2 p_x(0)\,p_x''(0)^2 + 2 p_y(0)\,p_y''(0)^2 = 2 \, (p_x'(0)^2 + p_y'(0)^2) > 0, \\
(p_x^2)''(0)  = 2 \, p_x'(0)^2 + 2 \, p_x(0)\,p_x''(0) = 2 \, p_x'(0)^2 > 0,
\end{gather*}
it is
\[
\lim_{z \to 0} \frac{p_x^2(z)}{p_x^2(z) + p_y^2(z)} = 
\frac{2\,p_x'(0)^2}{2\,(p_x'(0)^2 + p_y'(0)^2)} = \frac{p_x'(0)^2}{p_x'(0)^2 + p_y'(0)^2} > 0.
\]
This means that $f_3(z)$ is holomorphic on $B_{\eps} \backslash 0$ and admits a continuous continuation on $B_{\eps}$, but then it can be continued analytically on $B_\eps$.
\end{proof}

\begin{cor}\label{cor:sphere_intersection}
Consider the intersection of spheres
\begin{equation}\label{eq:sphere_intersection}
\begin{aligned}
x^2 + y^2 + z^2 - r_0 & = 0\\
(x - p_x(t))^2 + (y - p_y(t))^2 + (z - p_z(t))^2- r(t) & = 0,
\end{aligned}
\end{equation}
where $p_x,p_y,p_z,r_1$ analytic, $r'(0) = 0$, $r(0) = r_0$ and
\[
\begin{pmatrix}
p_x(0)\\
p_y(0)\\
p_z(0)
\end{pmatrix}
= \begin{pmatrix}
0\\0\\0
\end{pmatrix}, \quad
\begin{pmatrix}
p_x'(0)\\
p_y'(0)\\
p_z'(0)
\end{pmatrix}
\ne
\begin{pmatrix}
0\\0\\0
\end{pmatrix}.
\]
For every $p \in B_{r_0}(0)$ with $p \perp (p_x'(0), p_y'(0), p_z'(0))^T$ there exists an analytic path
$\delta \colon (-\eps,\eps) \to \R^3$ fulfilling \eqref{eq:sphere_intersection}, with $\delta(0) = p$.
\end{cor}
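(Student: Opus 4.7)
The plan is to reduce Corollary~\ref{cor:sphere_intersection} to the two-dimensional Lemma~\ref{lemma:circle_intersection} by restricting the sphere intersection to a plane containing both $p$ and the tangent vector $v \coloneqq (p_x'(0), p_y'(0), p_z'(0))^T$. Since $p$ must lie on both spheres at $t = 0$, we have $|p|^2 = r_0$, and in particular $p \ne 0$; together with the hypothesis $p \perp v$ this allows us to introduce an orthonormal frame $e_1 \coloneqq p/\sqrt{r_0}$, $e_3 \coloneqq v/|v|$, $e_2 \coloneqq e_3 \times e_1$. In these coordinates $p = (\sqrt{r_0}, 0, 0)^T$, and writing $(p_x(t), p_y(t), p_z(t)) = (a(t), b(t), c(t))$ one has $a(0) = b(0) = c(0) = 0$ and $(a'(0), b'(0), c'(0)) = (0, 0, |v|)$.

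Next, I would make the ansatz $\delta(t) = (x(t), 0, z(t))^T$, forcing the path to stay in the $e_1 e_3$-plane spanned by $p$ and $v$. Substituting into \eqref{eq:sphere_intersection} and absorbing the $b(t)^2$ contribution into the moving radius, the system collapses to
\[
x(t)^2 + z(t)^2 = r_0, \qquad (x(t) - a(t))^2 + (z(t) - c(t))^2 = r(t) - b(t)^2.
\]
This is precisely the setting of Lemma~\ref{lemma:circle_intersection}, with $(a, c)$ playing the role of $(p_x, p_y)$ and moving radius $\tilde{r}(t) \coloneqq \sqrt{r(t) - b(t)^2}$. The hypotheses of the lemma are easy to check: $\tilde{r}(0) = \sqrt{r_0} > 0$ and $\tilde{r}'(0) = 0$ follow from $r'(0) = 0$ and $b(0) = 0$, while $(a'(0), c'(0)) = (0, |v|) \ne 0$.

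Applying the lemma produces two analytic branches $(x(t), z(t))$ with initial values $\pm (\sqrt{r_0}, 0)^T$; taking the branch starting at $(\sqrt{r_0}, 0)$ and setting $\delta(t) \coloneqq x(t)\, e_1 + z(t)\, e_3$ yields an analytic path in $\R^3$ satisfying \eqref{eq:sphere_intersection} with $\delta(0) = p$, which is the desired conclusion.

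The genuinely technical work is already contained in Lemma~\ref{lemma:circle_intersection}, whose proof handles the analytic continuation through the degenerate parameter $t = 0$ at which the two spheres coincide and the intersection jumps from a circle to the whole sphere. What remains here is essentially bookkeeping; the key observation is that the frame must be chosen adapted simultaneously to $p$ and to $v$, so that under the ansatz $y \equiv 0$ the $y$-component of the moving center contributes only the $b(t)^2$ shift of the squared radius — precisely the shift that preserves both the analyticity and the vanishing-derivative condition required by the lemma.
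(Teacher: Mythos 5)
Your proof is correct and follows essentially the same route as the paper: restrict to the plane spanned by $p$ and $(p_x'(0),p_y'(0),p_z'(0))^T$ and invoke Lemma~\ref{lemma:circle_intersection}. You merely make explicit a detail the paper leaves implicit, namely that the out-of-plane drift $b(t)$ of the moving center only shifts the squared radius to $r(t)-b(t)^2$, which preserves the hypotheses $\tilde r(0)>0$ and $\tilde r'(0)=0$ of the lemma.
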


\begin{proof}
Let $p \in B_{r_0}$ with $p \perp (p_x'(0), p_y'(0), p_z'(0))^T$ and let $E$ be the plane through the origin spanned by the position vector of $p$ and $(p_x'(0), p_y'(0), p_z'(0))^T$. We choose $E$ as new 2-dimensional coordinate system and the statement follows with lemma~\ref{lemma:circle_intersection}
applied to circles given by the intersection of E with the spheres.
\end{proof}

\bibliography{literatur}{}

\begin{thebibliography}{10}

\bibitem{gosselin:singularities}
J.~Angeles C.~Gosselin.
\newblock Singularity analysis of closed-loop kinematic chains.
\newblock {\em IEEE Transactions on Robotics and Automation, Vol. 6}, 1990.

\bibitem{clavel:delta}
R.~Clavel.
\newblock {\em Conception d'un robot parall{è}le rapide {à} 4 degr{é}s de
  libert{é}}.
\newblock PhD thesis, EPFL, Lausanne, 1991.

\bibitem{zlatanov:singularities_inst}
D.~Zlatanov et~al.
\newblock Singularity analysis of mechanism and robots via a velocity-equation
  model of the instantaneous kinematics.
\newblock {\em Proceedings of the IEEE International Conf. on Robotics and
  Automation, San Diego}, 1994.

\bibitem{zlatanov:singularities}
D.~Zlatanov et~al.
\newblock Constraint singularities of parallel mechanism.
\newblock In {\em Proceedings of the 2002 IEEE International Conference on
  Robotics and Automation}, 2002.

\bibitem{park:manipulability}
J.W.~Kim F.C.~Park.
\newblock Manipulability of closed kinematic chains.
\newblock {\em transactions of the American Society of Mechanical Engineers},
  1998.

\bibitem{park:singularities}
J.W.~Kim F.C.~Park.
\newblock Singularity analysis of closed kinematic chains.
\newblock {\em Journal of Mechanical Design, Vol. 121}, 1999.

\bibitem{liu:singularities}
Z.~Li G.~Liu, Y.~Lou.
\newblock Singularities of parallel manipulators: A geometric treatment.
\newblock {\em IEEE Transactions on Robotics and Automation, Vol. 19}, 2003.

\bibitem{bochnak:real_alg_geom}
Marie-Francoise~Roy J.~Bochnak, M.~Coste.
\newblock {\em Real Algebraic Geometry}.
\newblock Springer, 1998.

\bibitem{mueller:singularities}
Andreas Müller.
\newblock {\em Singuläre Phänomene in der Kinematik von
  Starrkörpermechanismen}.
\newblock PhD thesis, Technische Universität Chemnitz, 2004.

\bibitem{steidel:symmetry}
Stefan Steidel.
\newblock Gröbner bases of symmetric ideals.
\newblock {\em Journal of Symbolic Computation}, 54, 2013.

\end{thebibliography}
\bibliographystyle{plain}
\end{document}